\documentclass{article}
\usepackage{iclr2027_conference,times}

\usepackage{amsmath,amsfonts,bm}

\def\eqref#1{equation~\ref{#1}}

\def\1{\bm{1}}

\DeclareMathAlphabet{\mathsfit}{\encodingdefault}{\sfdefault}{m}{sl}
\SetMathAlphabet{\mathsfit}{bold}{\encodingdefault}{\sfdefault}{bx}{n}

\usepackage{hyperref}
\usepackage{url}
\usepackage{booktabs}
\usepackage{multirow}
\usepackage{amsthm}
\usepackage{xcolor}
\usepackage{pifont}
\usepackage[nointegrals]{wasysym}
\usepackage{tikz}
\usepackage{float}
\usepackage{placeins}
\usetikzlibrary{arrows.meta,fit}

\newtheorem{proposition}{Proposition}

\title{Tsubame: Tree Replay for \\Diffusion-Based Speculative Decoding}

\author{Yepeng Weng$^{1}$, Qiao Hu$^{2}$\thanks{Corresponding Author.} , Takehisa Yairi$^{1}$ \\
$^{1}$The University of Tokyo \\
$^{2}$National Center for Mathematics and Interdisciplinary Sciences (NCMIS), AMSS, CAS \\
\texttt{yweng@g.ecc.u-tokyo.ac.jp, huqiao2020@amss.ac.cn} \\
}

\iclrfinalcopy

\begin{document}

\maketitle
\lhead{Preprint.}

\begin{abstract}
Context-aware dynamic trees allocate the speculative decoding budget according to draft path probabilities, adapting their depth and branching to the current context. Under stochastic decoding, however, we find that this structural advantage does not always compensate for the acceptance gains of random sampling paired with advanced verification, and such dynamic trees can fall behind sampled chains in some settings. These trees grow their topology from the candidates themselves, so the tokens submitted for verification are typically the deterministic high-score tokens selected during construction. This coupling is not inherent: once the topology is fixed, its nodes can be repopulated by sampling, allowing dynamic trees to retain their structural advantage while also benefiting from random sampling and advanced verification. Diffusion-based drafters make this practical, as their parallel outputs or lightweight conditional corrections allow candidates to be regenerated cheaply after the complete topology is known. We introduce \textbf{Tsubame}, a two-pass tree speculative decoding framework for diffusion-based drafters. The first pass plans and freezes a context-aware topology using draft path scores; the second replays the fixed topology, sampling the tokens that populate its nodes to form the candidate tree for verification. We prove that Tsubame is lossless under compatible sampling and verification strategies. Experiments across three diffusion-based drafters, six datasets, and multiple candidate budgets show that Tsubame improves acceptance length and throughput over deterministic trees, including settings where it reverses their disadvantage against sampled chains.
\end{abstract}

\section{Introduction}
\label{sec:introduction}

Autoregressive language models generate text one token at a time, requiring a separate forward pass for each new token. Speculative decoding reduces this sequential bottleneck by drafting multiple tokens with a lightweight model and verifying them in parallel while preserving the target model's output distribution \citep{leviathan2023fast,chen2023speculative}. Tree-based methods extend a single draft chain into multiple candidate continuations. In particular, context-aware dynamic trees deterministically select the tokens with the highest draft path scores to fill a candidate budget, so that the tree's depth and branching adapt to the current context \citep{li2024eagle2,ringel2026ddtree,lin2026dominotree}.

However, a stronger tree structure does not necessarily yield a stronger speculative decoder. Under stochastic decoding, the way candidates are generated and verified also matters. A sampled chain can be verified with Rejection Sampling or with Block Verification; the latter jointly verifies a draft block and is optimal in the expected number of tokens generated per iteration in single-chain speculative decoding \citep{sun2025block}. In our experiments, a deterministic dynamic tree outperforms the top-1 chain baseline, but falls behind a sampled chain with Block Verification on some tasks. Thus, comparing only against a deterministic chain can obscure an important limitation of dynamic trees.

This limitation stems from how dynamic trees select their candidates. Draft path scores guide the allocation of nodes to promising continuations, but local acceptance is then limited by the target probability mass of the retained high-score candidates. Random sampling, paired with appropriate verification, offers a different source of acceptance gains that deterministic selection does not exploit. Recent work has begun to introduce stochastic candidates within the standard dynamic-tree construction process \citep{rheosampling2026}. We ask a more radical question: \emph{can the planning of a tree's topology be fully separated from the sampling of its candidates?}

We answer this question with \textbf{Tsubame}, a two-pass tree speculative decoding framework that separates whole-tree topology planning from final candidate generation. The first pass uses draft path scores to construct and freeze a context-aware topology. The second pass then \emph{replays} this frozen topology: it keeps the same structure and samples fresh tokens for every node, and these tokens, rather than the deterministic planning candidates, are submitted for target-model verification. We refer to this procedure as \emph{tree replay}. This design retains the first pass's context-aware structural allocation while enabling random sampling and its associated verification rules. With autoregressive drafters, replay would require rerunning a draft forward pass at every tree layer under the replayed tokens, effectively repeating the entire drafting stage. Diffusion-based drafters make this cost small: future-position distributions are either available from parallel prediction or can be obtained under new prefixes through lightweight conditional computation \citep{chen2026dflash,huang2026domino,cheng2026dspark,inco2026dflash2}, making full-tree replay practical.

We evaluate Tsubame with three diffusion-based drafters, Domino, DFlash2, and DSpark, on six datasets spanning math, code, and conversation. Paired with different sampling strategies and verifiers, replay can improve over the deterministic tree, with tree-level verifiers realizing these gains most consistently. In particular, with mixed sampling and UniVer, replay improves acceptance length and throughput over both the deterministic first-pass tree and sampled chains with Block Verification on all six datasets for Domino and DFlash2, reversing every task-level deficit of the deterministic tree against Block Verification. These gains persist as the candidate budget grows from 16 to 64, and in concurrent serving, replay with tree-level verification outperforms both the deterministic tree and Block Verification at every evaluated concurrency. Together, these results show that a well-planned topology and effective sampling and verification offer complementary, rather than competing, routes to higher acceptance and throughput.

Our contributions are threefold:
\begin{itemize}
    \item We point out a largely overlooked gap in how dynamic trees are evaluated under stochastic decoding: comparisons against top-1 chains overstate the value of tree structure, as budget-matched sampled chains with advanced verification can outperform deterministic dynamic trees and thus serve as a more informative baseline.
    \item We propose Tsubame, a two-pass framework that freezes a context-aware topology in the first pass and replays it in the second, resampling every node for verification. We prove that Tsubame is lossless under any compatible sampling--verification pair.
    \item We show that Tsubame generalizes across diffusion-based drafters, tasks, and target models. Paired with suitable sampling and verification, it consistently improves acceptance length and throughput over deterministic trees, reverses their deficits against sampled chains, and retains its gains across candidate budgets and under concurrent serving.
\end{itemize}

\section{Related Work and Background}
\label{sec:background}

\subsection{Autoregressive and Diffusion-Based Drafters}
\label{sec:drafters}

Speculative decoding admits different forms of draft models. The original methods use a separate, smaller autoregressive language model to propose continuations for a larger target model \citep{leviathan2023fast,chen2023speculative}. The EAGLE series instead trains a lightweight autoregressive draft model that takes target-model features as input \citep{li2024eagle,li2025eagle3}. These methods allow the target model to verify multiple positions at once, but the drafter itself remains autoregressive: each draft forward pass advances the candidates by only one position.

Diffusion-based drafters address this remaining sequential bottleneck through parallel block prediction. DFlash produces distributions for multiple future positions in a single forward pass \citep{chen2026dflash}. Domino, DSpark, and DFlash2 further model within-block dependencies through lightweight mechanisms that avoid a full backbone pass for each draft position: Domino corrects logits with a GRU state advanced along the prefix, DSpark applies a Markov-head correction conditioned on the preceding token, and DFlash2 uses precomputed transition scores between candidates \citep{huang2026domino,cheng2026dspark,inco2026dflash2}.

\subsection{Tree-Structured Speculative Decoding}
\label{sec:tree-decoding}

Tree speculative decoding proposes multiple continuation paths \citep{miao2024specinfer,jeon2024rsd,xiong2024dyspec}. Static methods use predefined or offline-optimized topologies, as in EAGLE and Sequoia \citep{li2024eagle,chen2024sequoia}. Context-aware methods such as EAGLE-2 and OPT-Tree instead use draft probabilities as a surrogate for acceptance, deterministically selecting candidates with high cumulative path probabilities to fill the verification budget \citep{li2024eagle2,wang2025opttree}, allowing the tree to adapt its depth and branching to the context.

Different drafters can also lead to different tree-construction strategies. Autoregressive methods typically expand trees layer by layer, as in EAGLE-2's expand-then-prune procedure \citep{li2024eagle2}. Diffusion-based drafters make node-wise expansion inexpensive: DDTree uses max-heap search over parallel draft distributions produced by DFlash, while DominoTree uses Domino as its drafter and performs the corresponding lightweight conditional updates during expansion without rerunning the full diffusion backbone \citep{ringel2026ddtree,lin2026dominotree}. Despite their different construction orders, these dynamic methods use draft path scores to select candidates deterministically and retain the selected tokens for verification.

\subsection{Candidate Sampling and Verification}
\label{sec:sampling-verification}

Under stochastic decoding ($T>0$), acceptance depends on both candidate generation and verification. At a fixed prefix, let $p$ and $q$ denote the target and draft next-token distributions over vocabulary $\mathcal{V}$. Standard speculative decoding samples a candidate from $q$ and applies rejection sampling (RS), accepting token $v$ with probability $\min\{1,p(v)/q(v)\}$ and correcting the output distribution upon rejection \citep{leviathan2023fast,chen2023speculative}. Its marginal acceptance probability is
\begin{equation}
    \alpha_{\mathrm{RS}}(p,q)
    = \sum_{v\in\mathcal{V}} \min\{p(v),q(v)\}.
    \label{eq:rs-overlap}
\end{equation}
Alternatively, deterministically proposing $v^\star=\arg\max_v q(v)$ and matching it against a target sample gives acceptance probability $p(v^\star)$. Both approaches preserve the target distribution, but their acceptance probabilities differ. \citet{li2026breaking} report that RS often achieves higher acceptance rates than top-1 target-sample matching in native MTP models. Thus, even for a single draft token, how the candidate is generated and verified affects acceptance independently of any tree structure. With multiple candidates and positions, this design space expands to sampling schemes and multi-step verification rules, which we review in Appendix~\ref{app:sampling-verification}.

\subsection{Context-Aware Topology and Stochastic Proposals}
\label{sec:topology-sampling}

Static tree designs, such as EAGLE-1, SpecInfer, and Sequoia, specify the topology in advance \citep{li2024eagle,miao2024specinfer,chen2024sequoia}. In contrast, the context-aware constructions in EAGLE-2, OPT-Tree, DDTree, and DominoTree use candidate tokens and their draft path scores to adapt the topology \citep{li2024eagle2,wang2025opttree,ringel2026ddtree,lin2026dominotree}. As Table~\ref{tab:topology-sampling} shows, static trees leave candidate sampling and verification as a free choice, whereas dynamic trees largely do not. This is not a coincidence: introducing sampled candidates into a dynamic tree faces a substantive difficulty.

\begin{table}[t]
    \centering
    \caption{Comparison of topology construction and sampling/verifier flexibility. Tsubame separates context-aware topology planning from the choice of candidate sampling and verification.}
    \label{tab:topology-sampling}
    \footnotesize
    \setlength{\tabcolsep}{3pt}
    \renewcommand{\arraystretch}{1.2}
    \newcommand{\tableyes}{\textcolor{green!45!black}{\ding{51}}}
    \newcommand{\tableno}{\textcolor{red!75!black}{\ding{55}}}
    \newcommand{\tablepartial}{\textcolor[HTML]{C9A000}{\LEFTcircle}}
    \begin{tabular*}{\textwidth}{@{\extracolsep{\fill}}llllcc@{}}
        \toprule
        \textbf{Tree} & \textbf{Method} & \textbf{Drafter} & \textbf{Construction} & \textbf{Context-aware} & \textbf{Flexible Samp. \& Verif.}$^{*}$ \\
        \midrule
        \multirow{3}{*}{Static} & EAGLE-1 & AR & Handcrafted & \tableno & \tableyes \\
        & SpecInfer & AR & Fixed branching & \tableno & \tableyes \\
        & Sequoia & AR & Offline DP & \tableno & \tableyes \\
        \midrule
        \multirow{5}{*}{Dynamic} & EAGLE-2/3 & AR & Expand-then-prune & \tableyes & \tableno \\
        & RheoSampling & AR & EAGLE-2-style & \tableyes & \tablepartial \\
        & DDTree & Diffusion & Max-heap & \tableyes & \tableno \\
        & DominoTree & Diffusion+GRU & Max-heap & \tableyes & \tableno \\
        \cmidrule(l){2-6}
        & \textbf{Tsubame} & Diffusion-based & Plan + Replay & \tableyes & \tableyes \\
        \bottomrule
    \end{tabular*}
    \par\smallskip
    \begin{minipage}{\textwidth}
        \footnotesize
        $^{*}$Flexibility denotes the ability to choose candidate sampling and proposal-aware verification separately from topology planning. For static trees, the mark denotes support in principle rather than implementation or evaluation in the original work. \tablepartial\ denotes RheoSampling's restricted support: one sampled candidate per local deterministic candidate group, placed according to prescribed rules.
    \end{minipage}
\end{table}

In a static tree, the topology is fixed before any token is drafted. Sampled tokens only fill predetermined slots, and their realizations affect neither the structure nor the survival of any predefined node, including their own. The candidates submitted for verification therefore follow the sampling distribution without distortion, and any compatible sampling--verification pair remains lossless. In a dynamic tree, the opposite holds: the topology is determined by the candidates themselves, as whether a node is expanded, retained, or pruned depends on its own draft path score. If a sampled token directly enters such a construction, its survival depends on its realized token, so the retained candidates no longer follow the sampling distribution, and a verifier that assumes this distribution loses its losslessness guarantee.

RheoSampling addresses this issue within dynamic-tree construction by assigning each sampled candidate a prescribed proxy probability for structural scoring, separate from the actual sampling probability used for verification \citep{rheosampling2026}. This keeps the structural role of sampled candidates independent of their realization, but restricts sampling to one candidate per local deterministic group, placed according to prescribed rules.

Tsubame takes a more direct route: if a sampled token's realization must not influence which nodes survive, the structure can simply be fixed before any token is sampled. It plans the topology first and fills it afterwards, reducing candidate sampling and verification to the static-tree setting. As discussed in Section~\ref{sec:introduction}, diffusion-based drafters make this second pass practical.

\section{Tsubame: Tree Planning and Replay}
\label{sec:method}

Tsubame first plans a context-aware tree topology, then replays the fixed topology to generate the candidates used for verification (Figure~\ref{fig:tsubame-overview}). This separation allows the planner and the sampling--verification scheme to be chosen independently.

\FloatBarrier
\begin{figure}[t]
\centering
\begingroup
\definecolor{planink}{HTML}{426592}
\definecolor{replayink}{HTML}{238477}
\definecolor{mutedink}{HTML}{697586}
\definecolor{ruleink}{HTML}{B6C0CD}
\definecolor{titleink}{HTML}{253449}
\resizebox{\textwidth}{!}{%
\begin{tikzpicture}[x=1cm,y=0.85cm,font=\sffamily\small,
    tokencircle/.style={circle,draw=planink!60,fill=planink!7,minimum size=5.8mm,inner sep=0pt,text=planink},
    roottoken/.style={circle,draw=ruleink,line width=0.9pt,fill=white,text=titleink,minimum size=5.8mm,inner sep=0pt},
    treeedge/.style={draw=ruleink,line width=1pt},
    headline/.style={font=\sffamily\bfseries\large,text=titleink,anchor=west},
    annotation/.style={font=\sffamily\footnotesize,text=mutedink,align=center},
    headingframe/.style={draw=ruleink,line width=0.7pt,rounded corners=3pt,inner sep=3pt},
    transfer/.style={-{Stealth[length=2mm]},draw=mutedink,line width=0.8pt}]
\path[use as bounding box] (-0.1,0.6) rectangle (15,7);
\node[headline,text=planink] (planHeading) at (0.2,6.5) {First Pass};
\node[headline,text=mutedink] (slotsHeading) at (5.4,6.5) {Retain Topology};
\node[headline,text=replayink] (replayHeading) at (10.6,6.5) {Second Pass};
\node[annotation,anchor=west,text=planink!85] (planSubtitle) at (0.2,5.95) {Context-aware construction};
\node[annotation,anchor=west,text=mutedink!85] (slotsSubtitle) at (5.4,5.95) {Discard tokens, keep slots};
\node[annotation,anchor=west,text=replayink!85] (replaySubtitle) at (10.6,5.95) {Tree replay};
\foreach \offset/\tree in {0/plan,5.2/slots,10.4/replay} {
    \begin{scope}[shift={(\offset,0)},xshift=3mm]
        \node[roottoken] (\tree Root) at (2.25,4.95) {$r$};
        \coordinate (\tree Apos) at (1.15,3.9);
        \coordinate (\tree Bpos) at (3.3,3.9);
        \coordinate (\tree Cpos) at (0.6,2.85);
        \coordinate (\tree Dpos) at (1.75,2.85);
        \coordinate (\tree Epos) at (3.3,2.85);
        \coordinate (\tree Fpos) at (0.6,1.8);
    \end{scope}
}
\foreach \slot/\token in {A/a_1,B/a_2,C/b_1,D/b_2,E/b_3,F/c_1}
    \node[tokencircle] (plan\slot) at (plan\slot pos) {$\token$};
\foreach \slot in {A,B,C,D,E,F}
    \node[tokencircle,draw=mutedink!65,fill=white,densely dashed] (slots\slot) at (slots\slot pos) {};
\foreach \slot/\token in {A/a_1,B/a_2,C/b_1,D/b_2,E/b_3,F/c_1}
    \node[tokencircle,draw=replayink!60,fill=replayink!7,text=replayink] (replay\slot) at (replay\slot pos) {$\token^{\prime}$};
\foreach \tree in {plan,slots,replay} {
    \draw[treeedge] (\tree Root) -- (\tree A);
    \draw[treeedge] (\tree Root) -- (\tree B);
    \draw[treeedge] (\tree A) -- (\tree C);
    \draw[treeedge] (\tree A) -- (\tree D);
    \draw[treeedge] (\tree B) -- (\tree E);
    \draw[treeedge] (\tree C) -- (\tree F);
}
\node[annotation,text=planink] (planFooter) at (2.25,1.05) {Deterministic candidate tree};
\node[annotation] (slotsFooter) at (7.45,1.05) {Fixed candidate slots};
\node[annotation,text=replayink] (replayFooter) at (12.65,1.05) {Sampling and verification};
\foreach \tree/\horizontal in {plan/2.25,slots/7.45,replay/12.65} {
    \node[headingframe,minimum width=4.3cm,minimum height=5.44cm,inner sep=0pt] (\tree Frame) at (\horizontal,3.8) {};
}
\draw[transfer,shorten <=2pt,shorten >=2pt] (planFrame.east) -- (slotsFrame.west);
\draw[transfer,shorten <=2pt,shorten >=2pt] (slotsFrame.east) -- (replayFrame.west);
\end{tikzpicture}%
}
\endgroup
\caption{Overview of Tsubame. The first pass constructs a deterministic, context-aware candidate tree (left). Tsubame discards the planning tokens and retains the ordered topology (middle). The second pass regenerates candidates in the fixed slots using the chosen sampling scheme to form the tree used for target-model verification (right). Primed labels denote replayed candidates.}
\label{fig:tsubame-overview}
\end{figure}

\subsection{First Pass: Tree Construction}
\label{sec:tree-construction}

The first pass allocates the candidate budget using draft path scores, adapting tree construction to how each drafter obtains its conditional distributions. For DFlash2, transition scores between candidates are precomputed, so conditional distributions can be obtained by table lookup, and we use node-wise max-heap expansion. Domino and DSpark instead require a lightweight correction step for each new prefix. For these drafters, we use EAGLE-2-style expand-then-prune, which batches corrections across frontier nodes rather than invoking them separately for each expanded node. In principle, Tsubame does not constrain how the first pass is built: any planner that produces a topology before replay can be used, chosen according to the properties of each drafter. Our choices favor efficiency; for example, batched expand-then-prune achieves higher throughput than node-wise max-heap expansion at larger budgets with similar acceptance length (Appendix~\ref{app:planner-comparison}).

Branching increases the number of prefixes requiring conditional distributions, making full-vocabulary logit correction across branches expensive. We therefore restrict the draft vocabulary to a pool of top-scoring tokens from the backbone at each position (Appendix~\ref{app:experimental-details}). Logit corrections are computed only within this pool, and the corrected logits are normalized over the pool to obtain the draft distribution used for candidate selection.

The first pass produces a deterministic, context-aware dynamic tree, which also serves as our deterministic baseline (First) when verified directly. Tsubame discards the planning tokens and retains only the topology $\mathcal{T}$, consisting of the parent-child relations and child ordering.

\FloatBarrier
\subsection[Second Pass: The ``Kaeshi'' Replay]{Second Pass: The ``Kaeshi'' Replay\protect\footnotemark}
\label{sec:replay-verification}
\footnotetext{The name alludes to \emph{Tsubame-gaeshi}. Here it evokes a second pass over the planned tree.}

The second pass fills $\mathcal{T}$ layer by layer. Let $x$ denote the current generation prefix and $y_{\operatorname{path}(u)}$ the replayed tokens along the path from the root to node $u$, excluding the root itself. This sequence is empty when $u$ is the root. For each parent $u$, we compute
\begin{equation}
    q_u(\cdot) = q\bigl(\cdot \mid x, y_{\operatorname{path}(u)}\bigr)
    \label{eq:replay-distribution}
\end{equation}
and populate its fixed child slots using the chosen sampling scheme. Each $q_u$ is obtained by invoking the drafter's own conditional correction under the replayed prefix, restricted to the same sparse candidate pools as in the first pass, with parents at the same depth processed in parallel. The topology remains unchanged throughout replay.

Replay thus differs from the first pass only in candidate selection: it samples new candidates under the chosen scheme rather than retaining the planning tokens. We mainly use sampling without replacement (WoR) and mixed sampling, paired with the verifiers described in Appendix~\ref{app:sampling-verification}.

\subsection{Losslessness Guarantee}
\label{sec:losslessness}

Tsubame inherits its losslessness from the static-tree verification algorithm used in the second pass. Appendix~\ref{app:sampling-verification} describes the compatible sampling--verification pairs.

\begin{proposition}[Losslessness of tree replay]
\label{prop:replay-losslessness}
Suppose that the first pass fixes the tree topology before replay and that the topology is not subsequently changed according to the replayed tokens. If the sampling--verification method used during replay is lossless for any fixed topology, then Tsubame preserves the target model's generation distribution.
\end{proposition}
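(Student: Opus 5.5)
The plan is a conditioning argument on the frozen topology, reducing one Tsubame iteration to the static-tree setting. Fix the current prefix $x$ and let $\mathcal{T}$ denote the (random) topology returned by the first pass. The premise is that $\mathcal{T}$ is a function of $x$, the drafter, and possibly auxiliary randomness $\xi$ that is independent of the replay sampling randomness and of the verifier's randomness. The premise does not need the planning tokens to be deterministic, but $\xi$ must be independent of everything used afterwards. I will formalize ``lossless for any fixed topology'' as the following condition. For every topology $\tau$ in the support, the law of the tokens emitted when replay candidates are drawn on $\tau$ (via the $q_u$ of \eqref{eq:replay-distribution}) and then verified equals the law of the same number of tokens sampled autoregressively from the target $p(\cdot\mid x)$. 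Here ``emitted'' means the accepted prefix plus the correction or bonus token. The statement is meant per iteration, for the stopping-time-style output sequence.

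\textbf{Step 1: conditional losslessness.} Condition on $\{\mathcal{T}=\tau\}$. By the premise, no replayed token influences which slots exist, so the candidate tree is exactly a static tree on $\tau$ sampled from the scheme's proposal distribution. Because the first pass is independent of replay randomness, conditioning on $\mathcal{T}=\tau$ does not distort the joint law of the replayed tokens. This is the key point, and it contrasts with the dynamic-tree distortion described in Section~\ref{sec:topology-sampling}. Applying the hypothesis then gives
\[
\Pr(\text{output}=s\mid x,\mathcal{T}=\tau)=P_{\mathrm{target}}(s\mid x)
\]
for all output sequences $s$.

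\textbf{Step 2: marginalize over the topology.} Summing over $\tau$ with weights $\Pr(\mathcal{T}=\tau\mid x)$, the right-hand side does not depend on $\tau$. The unconditional output law therefore equals the target law for one iteration.

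\textbf{Step 3: induct over iterations.} Each iteration starts from the prefix produced so far and uses fresh randomness. A standard induction on the generated length, using the chain rule and the fact that the per-iteration law matches target continuation in the sense above, shows the full generated sequence has the target distribution.

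\textbf{Main obstacle.} The delicate step is Step 1. Two things must be checked. First, the per-iteration ``lossless'' notion must be the right one for iterations of variable length, so that concatenating outputs really reproduces autoregressive target sampling; I would handle this by defining losslessness as equality of the law of the emitted block together with its length. Second, conditioning on $\mathcal{T}$ must not leak information into replay. If the first pass shares random draws with replay, the conditional proposal law could differ from what the verifier assumes. I would therefore state explicitly that the planning randomness is independent of the replay and verification randomness. This holds trivially for the deterministic planners used in Section~\ref{sec:tree-construction}, and it is exactly what the hypothesis ``not subsequently changed according to the replayed tokens'' secures.
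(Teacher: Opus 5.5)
Your proposal is correct and follows essentially the same route as the paper's proof: condition on the frozen first-pass topology, apply the assumed static-tree losslessness of the replay sampling--verification pair, marginalize over topologies, and repeat across decoding rounds. Your extra care sharpens what the paper leaves implicit, with two caveats: independence of the planning randomness from the replay and verification randomness is an additional assumption (automatically satisfied here because the planners are deterministic) rather than literally the stated hypothesis, and per-round losslessness is best phrased as ``the emitted block, completed by further target sampling, has the target law'' rather than as a literal equality $\Pr(\text{output}=s)=P_{\mathrm{target}}(s\mid x)$ over blocks of varying length.
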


\begin{proof}
    Fix the current decoding history and the topology produced by the first pass. Replay and verification then form a static-tree speculative decoding procedure on the fixed topology. Its output therefore follows the target model's distribution by the losslessness of the chosen verifier. Since this holds for every topology produced by the first pass, marginalizing over the topology and repeating the argument across decoding rounds preserves the target sequence distribution.
\end{proof}

In particular, Mixed/UniVer is lossless because UniVer provides this guarantee for any fixed tree topology under its compatible sampling rule \citep{weng2026univer}. During verification, however, the probabilities must correspond to the candidates generated during replay, not to the planning tokens discarded after the first pass. If replay uses a restricted candidate pool, these probabilities are normalized within that pool.

\section{Experiments}
\label{sec:experiments}

We evaluate whether replay improves acceptance and throughput over the deterministic first-pass tree, whether it outperforms sampled chains equipped with advanced verification, and how these effects vary with the candidate budget, the sampling--verification pair, and concurrent serving.

\subsection{Experimental Setup}
\label{sec:experimental-setup}

\paragraph{Models and datasets.}
Our main experiments use DFlash2 \citep{inco2026dflash2} and Domino \citep{huang2026domino}, with DSpark \citep{cheng2026dspark} results provided in Appendix~\ref{app:dspark-results}. Domino and DSpark use Qwen3-8B \citep{yang2025qwen3} as the default target model with thinking disabled, while DFlash2 uses Muse Glimmer 30B \citep{meta2026museglimmer}. Additional DFlash2 results with Qwen3.8-27B \citep{qwen2026qwen38} are reported in Appendix~\ref{app:dspark-results}. We evaluate on six datasets spanning three tasks: math (GSM8K \citep{cobbe2021gsm8k} and Math500 \citep{lightman2023verify}), code (HumanEval \citep{chen2021humaneval} and MBPP \citep{austin2021mbpp}), and conversation (MT-Bench \citep{zheng2023mtbench} and Alpaca \citep{taori2023alpaca}).

\paragraph{Proposal and verification strategies.}
For chains, we compare deterministic top-1 proposals with target-sample matching (Top1), and sampled proposals with rejection sampling (RS) \citep{leviathan2023fast,chen2023speculative} or Block Verification \citep{sun2025block}. For trees, we mainly use two sampling schemes: standard sampling without replacement (WoR), paired with layer-wise RRS or Traversal Verification \citep{chen2024sequoia,weng2025traversal}; and mixed sampling \citep{hu2025optimal}, paired with UniVer \citep{weng2026univer}. Table~\ref{tab:verifier-guide} summarizes their differences; please refer to Appendix \ref{app:sampling-verification} for detailed discussion on sampling and verification strategies.

\begin{table}[!t]
\centering
\small
\setlength{\tabcolsep}{4pt}
\renewcommand{\arraystretch}{1.12}
\caption{Guide to the verifiers used in our experiments. Scope distinguishes step-wise from sequence-level verification. The sampling column lists the configurations evaluated here, rather than all supported schemes. All pairs preserve the target distribution.}
\label{tab:verifier-guide}
\resizebox{\textwidth}{!}{%
\begin{tabular}{lllll}
\toprule
Structure & Verifier & Scope & Sampling & Description \\
\midrule
\multirow{3}{*}{Chain} & Naive & - & Top1 & Target-sample matching \\
 & RS & Step-wise & Random & Standard rejection sampling \\
 & BlockV & Sequence-level & Random & Improves RS through sequence-level coupling \\
\midrule
\multirow{4}{*}{Tree} & Naive & - & TopK & Target-sample matching \\
 & RRS & Step-wise & WoR & Standard recursive rejection across siblings \\
 & Traversal & Sequence-level & WoR & RRS with sequence-level coupling \\
 & UniVer & Sequence-level & Mixed & Unified verification supporting flexible sampling strategies \\
\bottomrule
\end{tabular}
}
\end{table}

\paragraph{Metrics and implementations.}
We report speedup ratio (SR) relative to autoregressive decoding and average acceptance length ($\tau$), with calculation details in Appendix~\ref{app:experimental-details}. Unless otherwise specified, we use temperature $T=1$, a maximum generation length of 512 tokens, and a candidate budget of $B=16$ nodes including the root, which also sets the draft length for chains. Single-request Qwen3-8B experiments run on NVIDIA GeForce RTX 3090 GPUs, while serving experiments and all Muse Glimmer 30B and Qwen3.8-27B experiments use NVIDIA RTX PRO 6000 GPUs with 96\,GB of memory. Unless otherwise noted, we pool results over three random seeds. All tree-construction and verification implementations use CUDA Graphs.

\subsection{Main Results}
\label{sec:main-results}

\begin{table}[t]
\centering
\footnotesize
\setlength{\tabcolsep}{2.5pt}
\renewcommand{\arraystretch}{1.12}
\caption{DFlash2 with Muse Glimmer 30B results on all six datasets at $B=16$. TopK denotes deterministic candidate selection in the first pass, and Naive denotes target-sample matching. Bold marks the best value in each column. The underlined verifier denotes the default configuration adopted by DFlash2.}
\label{tab:main-results-dflash2}
\resizebox{\textwidth}{!}{%
\begin{tabular}{lll*{6}{cc}}
\toprule
 & & & \multicolumn{4}{c}{Math} & \multicolumn{4}{c}{Code} & \multicolumn{4}{c}{Conversation} \\
\cmidrule(lr){4-7}\cmidrule(lr){8-11}\cmidrule(lr){12-15}
 & &  & \multicolumn{2}{c}{GSM8K} & \multicolumn{2}{c}{Math500} & \multicolumn{2}{c}{HumanEval} & \multicolumn{2}{c}{MBPP} & \multicolumn{2}{c}{MT-Bench} & \multicolumn{2}{c}{Alpaca} \\
\cmidrule(lr){4-5}\cmidrule(lr){6-7}\cmidrule(lr){8-9}\cmidrule(lr){10-11}\cmidrule(lr){12-13}\cmidrule(lr){14-15}
Structure & Sampling & Verifier & $\tau$ & SR & $\tau$ & SR & $\tau$ & SR & $\tau$ & SR & $\tau$ & SR & $\tau$ & SR \\
\midrule
\multirow{3}{*}{Chain} & Top1 & Naive & 5.44 & $4.15\times$ & 5.42 & $4.14\times$ & 4.41 & $3.35\times$ & 3.75 & $2.89\times$ & 3.38 & $2.55\times$ & 2.97 & $2.29\times$ \\
 & Random & \underline{RS} & 6.25 & $4.67\times$ & 6.12 & $4.58\times$ & 5.48 & $4.08\times$ & 4.82 & $3.63\times$ & 3.94 & $2.91\times$ & 3.62 & $2.73\times$ \\
 & Random & BlockV & 6.52 & $4.87\times$ & 6.25 & $4.67\times$ & 5.67 & $4.22\times$ & 5.07 & $3.82\times$ & 4.12 & $3.04\times$ & 3.70 & $2.79\times$ \\
\midrule
Tree (First) & TopK & Naive & 6.39 & $4.75\times$ & 6.21 & $4.62\times$ & 5.42 & $4.02\times$ & 4.69 & $3.52\times$ & 4.14 & $3.04\times$ & 3.70 & $2.78\times$ \\
\midrule
\multirow{3}{*}{Tree (Replay)} & WoR & RRS & 6.67 & $4.94\times$ & 6.59 & $4.89\times$ & 5.85 & $4.32\times$ & 5.12 & $3.83\times$ & 4.39 & $3.21\times$ & 3.96 & $2.97\times$ \\
 & WoR & Traversal & \textbf{6.83} & $\mathbf{5.06}\times$ & 6.68 & $4.96\times$ & \textbf{5.97} & $\mathbf{4.42}\times$ & \textbf{5.30} & $\mathbf{3.97}\times$ & \textbf{4.50} & $\mathbf{3.30}\times$ & \textbf{4.03} & $\mathbf{3.02}\times$ \\
 & Mixed & UniVer & 6.79 & $5.03\times$ & \textbf{6.70} & $\mathbf{4.97}\times$ & 5.96 & $4.40\times$ & 5.25 & $3.92\times$ & 4.46 & $3.26\times$ & 4.02 & $3.01\times$ \\
\bottomrule
\end{tabular}
}
\end{table}

\begin{table}[t]
\centering
\footnotesize
\setlength{\tabcolsep}{2.5pt}
\renewcommand{\arraystretch}{1.12}
\caption{Domino with Qwen3-8B results on all six datasets at $B=16$. All methods use sparse64 draft candidate pools. TopK denotes deterministic candidate selection in the first pass, and Naive denotes target-sample matching. Bold marks the best value in each column. The underlined verifier denotes the default configuration adopted by Domino.}
\label{tab:main-results}
\resizebox{\textwidth}{!}{%
\begin{tabular}{lll*{6}{cc}}
\toprule
 & & & \multicolumn{4}{c}{Math} & \multicolumn{4}{c}{Code} & \multicolumn{4}{c}{Conversation} \\
\cmidrule(lr){4-7}\cmidrule(lr){8-11}\cmidrule(lr){12-15}
 & &  & \multicolumn{2}{c}{GSM8K} & \multicolumn{2}{c}{Math500} & \multicolumn{2}{c}{HumanEval} & \multicolumn{2}{c}{MBPP} & \multicolumn{2}{c}{MT-Bench} & \multicolumn{2}{c}{Alpaca} \\
\cmidrule(lr){4-5}\cmidrule(lr){6-7}\cmidrule(lr){8-9}\cmidrule(lr){10-11}\cmidrule(lr){12-13}\cmidrule(lr){14-15}
Structure & Sampling & Verifier & $\tau$ & SR & $\tau$ & SR & $\tau$ & SR & $\tau$ & SR & $\tau$ & SR & $\tau$ & SR \\
\midrule
\multirow{3}{*}{Chain} & Top1 & \underline{Naive} & 7.21 & $5.54\times$ & 5.96 & $4.62\times$ & 5.49 & $4.26\times$ & 5.16 & $4.08\times$ & 3.14 & $2.44\times$ & 2.87 & $2.22\times$ \\
 & Random & RS & 8.05 & $6.14\times$ & 6.58 & $5.06\times$ & 6.08 & $4.68\times$ & 5.66 & $4.45\times$ & 3.56 & $2.75\times$ & 3.20 & $2.47\times$ \\
 & Random & BlockV & 8.38 & $6.38\times$ & 6.98 & $5.35\times$ & 6.30 & $4.83\times$ & 5.87 & $4.60\times$ & 3.73 & $2.87\times$ & 3.35 & $2.57\times$ \\
\midrule
Tree (First) & TopK & Naive & 8.00 & $6.04\times$ & 6.75 & $5.16\times$ & 6.49 & $4.97\times$ & 6.30 & $4.93\times$ & 4.26 & $3.24\times$ & 3.97 & $3.04\times$ \\
\midrule
\multirow{3}{*}{Tree (Replay)} & WoR & RRS & 8.37 & $6.31\times$ & 7.19 & $5.46\times$ & 6.57 & $4.98\times$ & 6.23 & $4.84\times$ & 4.22 & $3.19\times$ & 3.85 & $2.93\times$ \\
 & WoR & Traversal & \textbf{8.59} & $\mathbf{6.45}\times$ & 7.23 & $5.49\times$ & \textbf{6.88} & $\mathbf{5.21}\times$ & 6.44 & $4.99\times$ & 4.35 & $3.29\times$ & 3.96 & $3.01\times$ \\
 & Mixed & UniVer & 8.57 & $6.44\times$ & \textbf{7.26} & $\mathbf{5.51}\times$ & 6.87 & $5.19\times$ & \textbf{6.48} & $\mathbf{5.02}\times$ & \textbf{4.46} & $\mathbf{3.36}\times$ & \textbf{4.10} & $\mathbf{3.11}\times$ \\
\bottomrule
\end{tabular}
}
\end{table}

Tables~\ref{tab:main-results-dflash2} and~\ref{tab:main-results} distinguish the benefit of tree structure from the effects of candidate generation and verification at the same budget, $B=16$. For both drafters, deterministic dynamic trees achieve longer acceptance lengths than Top1 chains on every dataset, showing a clear structural advantage over a single deterministic continuation.

However, this advantage shrinks when chains use random sampling with RS or Block Verification. DFlash2 exhibits the clearest reversal: Block Verification exceeds the deterministic tree in throughput on five of the six datasets, with MT-Bench essentially tied, and in acceptance length on four. Domino shows the same limitation on the two math tasks and, more mildly, on HumanEval: on HumanEval, the deterministic tree's acceptance advantage narrows from 6.49 versus 5.49 against Top1 to 6.49 versus 6.30 against Block Verification, while on GSM8K and Math500 the ordering reverses. Comparing only against Top1 thus overstates the advantage of deterministic trees; Block Verification, which is optimal in single-chain speculative decoding, provides a more informative reference.

These comparisons suggest complementary roles: deterministic trees improve structural allocation, while stochastic candidates paired with suitable verification offer additional acceptance gains. Replay combines the two by retaining the planned topology while regenerating its candidates. With Mixed/UniVer, replay improves both acceptance length and throughput over the first pass and Block Verification on all six datasets for both drafters, reversing every deficit of the deterministic tree against Block Verification; on Domino, for example, the throughput deficits on GSM8K and Math500 become gains of 1.00\% and 2.93\%. The pooled throughput gains over the first pass are 8.51\% for DFlash2 and 4.19\% for Domino, showing that the additional acceptance more than offsets replay's extra computation. WoR/Traversal achieves comparable overall performance.

\subsection{Budget Scaling and Verification Choices}
\label{sec:budget-results}

\begin{table}[t]
\centering
\footnotesize
\setlength{\tabcolsep}{2.5pt}
\renewcommand{\arraystretch}{1.12}
\caption{DFlash2 with Muse Glimmer 30B across candidate budgets on all six datasets. Dataset cells report $\tau$ / SR. Replay uses WoR/RRS, WoR/Traversal, or Mixed/UniVer. The final column reports the respective changes relative to First pass at the same budget. Bold marks the best value for each metric within each budget. The percentages are calculated using the values before rounding.}
\label{tab:budget-results-dflash2}
\resizebox{\textwidth}{!}{%
\begin{tabular}{ll*{7}{c}}
\toprule
 & & \multicolumn{2}{c}{Math} & \multicolumn{2}{c}{Code} & \multicolumn{2}{c}{Conversation} & \\
\cmidrule(lr){3-4}\cmidrule(lr){5-6}\cmidrule(lr){7-8}
Budget & Method & GSM8K & Math500 & HumanEval & MBPP & MT-Bench & Alpaca & Gain $\tau$ / SR \\
\midrule
\multirow{4}{*}{16} & First & 6.39 / $4.75\times$ & 6.21 / $4.62\times$ & 5.42 / $4.02\times$ & 4.69 / $3.52\times$ & 4.14 / $3.04\times$ & 3.70 / $2.78\times$ & -- \\
 & Replay (RRS) & 6.67 / $4.94\times$ & 6.59 / $4.89\times$ & 5.85 / $4.32\times$ & 5.12 / $3.83\times$ & 4.39 / $3.21\times$ & 3.96 / $2.97\times$ & +6.97\% / +6.73\% \\
 & Replay (Traversal) & \textbf{6.83} / $\mathbf{5.06}\times$ & 6.68 / $4.96\times$ & \textbf{5.97} / $\mathbf{4.42}\times$ & \textbf{5.30} / $\mathbf{3.97}\times$ & \textbf{4.50} / $\mathbf{3.30}\times$ & \textbf{4.03} / $\mathbf{3.02}\times$ & \textbf{+9.45\%} / \textbf{+9.20\%} \\
 & Replay (UniVer) & 6.79 / $5.03\times$ & \textbf{6.70} / $\mathbf{4.97}\times$ & 5.96 / $4.40\times$ & 5.25 / $3.92\times$ & 4.46 / $3.26\times$ & 4.02 / $3.01\times$ & +8.88\% / +8.51\% \\
\midrule
\multirow{4}{*}{32} & First & 7.07 / $4.96\times$ & 7.04 / $4.94\times$ & 6.00 / $4.20\times$ & 5.28 / $3.74\times$ & 4.60 / $3.18\times$ & 4.09 / $2.90\times$ & -- \\
 & Replay (RRS) & 7.33 / $5.14\times$ & 7.20 / $5.05\times$ & 6.36 / $4.44\times$ & 5.66 / $4.00\times$ & 4.74 / $3.28\times$ & 4.27 / $3.03\times$ & +4.50\% / +4.41\% \\
 & Replay (Traversal) & 7.49 / $5.25\times$ & 7.38 / $5.18\times$ & 6.52 / $4.56\times$ & \textbf{5.82} / $\mathbf{4.11}\times$ & \textbf{4.92} / $\mathbf{3.40}\times$ & \textbf{4.42} / $\mathbf{3.13}\times$ & +7.67\% / +7.53\% \\
 & Replay (UniVer) & \textbf{7.54} / $\mathbf{5.28}\times$ & \textbf{7.52} / $\mathbf{5.26}\times$ & \textbf{6.57} / $\mathbf{4.58}\times$ & 5.80 / $4.10\times$ & 4.91 / $3.39\times$ & \textbf{4.42} / $\mathbf{3.13}\times$ & \textbf{+7.95\%} / \textbf{+7.68\%} \\
\midrule
\multirow{4}{*}{64} & First & 7.77 / $5.32\times$ & 7.67 / $5.26\times$ & 6.56 / $4.48\times$ & 5.75 / $3.97\times$ & 4.94 / $3.34\times$ & 4.42 / $3.06\times$ & -- \\
 & Replay (RRS) & 7.86 / $5.38\times$ & 7.69 / $5.27\times$ & 6.85 / $4.67\times$ & 6.02 / $4.16\times$ & 5.06 / $3.42\times$ & 4.56 / $3.15\times$ & +2.93\% / +2.81\% \\
 & Replay (Traversal) & 7.90 / $5.41\times$ & 7.81 / $5.35\times$ & 6.94 / $4.74\times$ & 6.22 / $4.30\times$ & 5.12 / $3.46\times$ & 4.64 / $3.22\times$ & +4.72\% / +4.67\% \\
 & Replay (UniVer) & \textbf{8.14} / $\mathbf{5.58}\times$ & \textbf{8.06} / $\mathbf{5.53}\times$ & \textbf{6.95} / $\mathbf{4.75}\times$ & \textbf{6.23} / $\mathbf{4.31}\times$ & \textbf{5.26} / $\mathbf{3.56}\times$ & \textbf{4.71} / $\mathbf{3.27}\times$ & \textbf{+6.36\%} / \textbf{+6.48\%} \\
\bottomrule
\end{tabular}
}
\end{table}

\begin{table}[t]
\centering
\footnotesize
\setlength{\tabcolsep}{2.5pt}
\renewcommand{\arraystretch}{1.12}
\caption{Domino with Qwen3-8B across candidate budgets on all six datasets. Dataset cells report $\tau$ / SR. Replay uses WoR/RRS, WoR/Traversal, or Mixed/UniVer. The final column reports the respective $\tau$ / SR changes relative to First pass at the same budget. Bold marks the best value for each metric within each budget. The percentages are calculated using the values before rounding.}
\label{tab:budget-results}
\resizebox{\textwidth}{!}{%
\begin{tabular}{ll*{7}{c}}
\toprule
 & & \multicolumn{2}{c}{Math} & \multicolumn{2}{c}{Code} & \multicolumn{2}{c}{Conversation} & \\
\cmidrule(lr){3-4}\cmidrule(lr){5-6}\cmidrule(lr){7-8}
Budget & Method & GSM8K & Math500 & HumanEval & MBPP & MT-Bench & Alpaca & Gain $\tau$ / SR \\
\midrule
\multirow{4}{*}{16} & First & 8.00 / $6.04\times$ & 6.75 / $5.16\times$ & 6.49 / $4.97\times$ & 6.30 / $4.93\times$ & 4.26 / $3.24\times$ & 3.97 / $3.04\times$ & -- \\
 & Replay (RRS) & 8.37 / $6.31\times$ & 7.19 / $5.46\times$ & 6.57 / $4.98\times$ & 6.23 / $4.84\times$ & 4.22 / $3.19\times$ & 3.85 / $2.93\times$ & +0.71\% / +0.09\% \\
 & Replay (Traversal) & \textbf{8.59} / $\mathbf{6.45}\times$ & 7.23 / $5.49\times$ & \textbf{6.88} / $\mathbf{5.21}\times$ & 6.44 / $4.99\times$ & 4.35 / $3.29\times$ & 3.96 / $3.01\times$ & +3.75\% / +2.98\% \\
 & Replay (UniVer) & 8.57 / $6.44\times$ & \textbf{7.26} / $\mathbf{5.51}\times$ & 6.87 / $5.19\times$ & \textbf{6.48} / $\mathbf{5.02}\times$ & \textbf{4.46} / $\mathbf{3.36}\times$ & \textbf{4.10} / $\mathbf{3.11}\times$ & \textbf{+5.10\%} / \textbf{+4.19\%} \\
\midrule
\multirow{4}{*}{32} & First & 9.36 / $6.18\times$ & 7.79 / $5.22\times$ & 7.35 / $4.93\times$ & 7.30 / $5.00\times$ & 4.78 / $3.20\times$ & 4.43 / $2.97\times$ & -- \\
 & Replay (RRS) & 9.53 / $6.28\times$ & 7.88 / $5.24\times$ & 7.44 / $4.95\times$ & 7.11 / $4.83\times$ & 4.69 / $3.12\times$ & 4.29 / $2.85\times$ & -0.74\% / -1.44\% \\
 & Replay (Traversal) & \textbf{9.72} / $\mathbf{6.39}\times$ & 8.16 / $5.41\times$ & 7.67 / $5.10\times$ & 7.24 / $4.91\times$ & 4.83 / $3.20\times$ & 4.38 / $2.91\times$ & +1.97\% / +1.10\% \\
 & Replay (UniVer) & 9.70 / $6.33\times$ & \textbf{8.21} / $\mathbf{5.44}\times$ & \textbf{7.80} / $\mathbf{5.17}\times$ & \textbf{7.59} / $\mathbf{5.15}\times$ & \textbf{5.01} / $\mathbf{3.32}\times$ & \textbf{4.62} / $\mathbf{3.06}\times$ & \textbf{+4.90\%} / \textbf{+3.80\%} \\
\midrule
\multirow{4}{*}{64} & First & 10.32 / $6.81\times$ & 8.46 / $5.67\times$ & 8.14 / $5.45\times$ & 8.05 / $5.51\times$ & 5.21 / $3.48\times$ & 4.84 / $3.25\times$ & -- \\
 & Replay (RRS) & 10.16 / $6.69\times$ & 8.46 / $5.62\times$ & 8.04 / $5.35\times$ & 7.73 / $5.26\times$ & 5.08 / $3.37\times$ & 4.63 / $3.08\times$ & -2.21\% / -2.84\% \\
 & Replay (Traversal) & 10.39 / $6.83\times$ & 8.79 / $5.83\times$ & 8.31 / $5.51\times$ & 8.06 / $5.47\times$ & 5.23 / $3.46\times$ & 4.79 / $3.18\times$ & +0.99\% / +0.03\% \\
 & Replay (UniVer) & \textbf{10.62} / $\mathbf{6.93}\times$ & \textbf{8.85} / $\mathbf{5.87}\times$ & \textbf{8.58} / $\mathbf{5.69}\times$ & \textbf{8.29} / $\mathbf{5.62}\times$ & \textbf{5.44} / $\mathbf{3.60}\times$ & \textbf{5.11} / $\mathbf{3.39}\times$ & \textbf{+4.60\%} / \textbf{+3.53\%} \\
\bottomrule
\end{tabular}
}
\end{table}

\paragraph{Replay vs. deterministic drafting.}
Increasing the candidate budget strengthens the deterministic tree but does not remove the benefit of replay. Tables~\ref{tab:budget-results-dflash2} and~\ref{tab:budget-results} show that the first pass achieves longer acceptance lengths as the budget grows from 16 to 64, while Mixed/UniVer improves both acceptance length and throughput on every dataset at each budget for both drafters. Its relative gains narrow as the deterministic baseline becomes stronger, yet at $B=64$ it still improves pooled throughput by 6.48\% for DFlash2 and 3.53\% for Domino. Allocating more nodes and improving how those nodes are populated and verified are therefore complementary.

\paragraph{Advanced vs. vanilla verification.}
The comparison between layer-wise RRS and Traversal clarifies the role of verification. Both use WoR candidates, but Traversal achieves higher pooled acceptance lengths and throughput at every budget for both drafters. In DFlash2, both methods' relative gains narrow as the budget grows, but Traversal retains a larger fraction of its $B=16$ acceptance gain at $B=64$. In Domino, the RRS acceptance gain becomes negative as the budget increases, whereas Traversal's remains positive. Resampling candidates alone thus does not fully exploit the multi-step continuations in the planned tree; a verifier that accounts for sequence-level dependencies makes more effective use of the same budget.

\paragraph{Impact of sampling strategy.}
Under WoR sampling, UniVer is mathematically equivalent to Traversal Verification \citep{weng2026univer}, so WoR/Traversal and Mixed/UniVer compare sampling strategies within the same verification framework. For DFlash2, WoR/Traversal is slightly stronger at $B=16$, while Mixed/UniVer is strongest at $B=32$ and $B=64$. For Domino, Mixed/UniVer achieves the highest pooled acceptance length and throughput at all three budgets, with its acceptance advantage over WoR/Traversal widening as the budget grows. These results motivate choosing the sampling strategy according to the task, model, and candidate budget (see also Appendix~\ref{app:dspark-results}).

\subsection{Serving Throughput}
\label{sec:serving-results}

\begin{table}[t]
\centering
\footnotesize
\setlength{\tabcolsep}{1.5pt}
\renewcommand{\arraystretch}{1.12}
\caption{DFlash2 with Muse Glimmer serving throughput (TPS) on a single RTX PRO 6000 GPU at $B=16$, pooled over GSM8K, HumanEval, and MT-Bench with three seeds. $C$ denotes concurrency, and parenthesized percentages are relative to the RS chain. The underlined verifier denotes the default chain verifier adopted by DFlash2. Bold marks the highest throughput in each column.}
\label{tab:serving-results}
\resizebox{\textwidth}{!}{%
\begin{tabular}{lll*{6}{c}}
\toprule
Structure & Sampling & Verifier & $C=1$ & $C=2$ & $C=4$ & $C=8$ & $C=16$ & $C=32$ \\
\midrule
\multirow{2}{*}{Chain} & Random & \underline{RS} & 114.7 & 216.3 & 418.8 & 723.8 & 1153.6 & 1449.8 \\
 & Random & BlockV & 118.9~(+3.6\%) & 224.6~(+3.9\%) & 434.2~(+3.7\%) & 744.9~(+2.9\%) & 1189.4~(+3.1\%) & 1504.5~(+3.8\%) \\
\midrule
Tree (First) & TopK & Naive & 112.9~(-1.5\%) & 215.1~(-0.5\%) & 414.7~(-1.0\%) & 711.8~(-1.7\%) & 1151.1~(-0.2\%) & 1451.2~(+0.1\%) \\
\midrule
\multirow{3}{*}{Tree (Replay)} & WoR & RRS & 120.7~(+5.2\%) & 227.1~(+5.0\%) & 439.4~(+4.9\%) & 755.3~(+4.3\%) & 1208.3~(+4.7\%) & 1527.5~(+5.4\%) \\
 & WoR & Traversal & \textbf{123.1~(+7.4\%)} & \textbf{232.4~(+7.5\%)} & \textbf{449.0~(+7.2\%)} & \textbf{771.6~(+6.6\%)} & 1230.2~(+6.6\%) & \textbf{1569.7~(+8.3\%)} \\
 & Mixed & UniVer & 122.2~(+6.6\%) & 231.7~(+7.1\%) & 446.2~(+6.5\%) & 767.7~(+6.1\%) & \textbf{1237.1~(+7.2\%)} & 1556.4~(+7.4\%) \\
\bottomrule
\end{tabular}
}
\end{table}

Concurrent serving tests whether replay remains beneficial when multiple requests share execution resources. We evaluate both drafters at $B=16$ on GSM8K, HumanEval, and MT-Bench at concurrencies up to 32 (Tables~\ref{tab:serving-results} and~\ref{tab:serving-results-domino}). For DFlash2, the deterministic tree stays close to the RS chain and below Block Verification at every concurrency, so structural allocation alone yields no throughput advantage. For Domino, the deterministic tree is already competitive with Block Verification, trailing it only at $C=8$.

In both settings, replay with Traversal or UniVer outperforms the deterministic tree and Block Verification at every concurrency. Relative to Block Verification, WoR/Traversal and Mixed/UniVer improve throughput by 3.4\%--4.3\% and 2.8\%--4.0\% for DFlash2, and by 1.3\%--3.0\% and 2.2\%--3.8\% for Domino. Replay's acceptance gains thus continue to offset its additional computation under concurrent serving, including when the deterministic tree is already competitive.

\begin{table}[t]
\centering
\footnotesize
\setlength{\tabcolsep}{1.5pt}
\renewcommand{\arraystretch}{1.12}
\caption{Domino with Qwen3-8B serving throughput (TPS) on a single RTX PRO 6000 GPU at $B=16$, pooled over GSM8K, HumanEval, and MT-Bench with three seeds. $C$ denotes concurrency, and parenthesized percentages are relative to the Top1 chain. The underlined verifier denotes the default chain verifier adopted by Domino. Bold marks the highest throughput in each column.}
\label{tab:serving-results-domino}
\resizebox{\textwidth}{!}{%
\begin{tabular}{lll*{5}{c}}
\toprule
Structure & Sampling & Verifier & $C=2$ & $C=4$ & $C=8$ & $C=16$ & $C=32$ \\
\midrule
\multirow{3}{*}{Chain} & Top1 & \underline{Naive} & 658.8 & 1253.2 & 2232.1 & 3407.3 & 4301.8 \\
 & Random & RS & 684.4~(+3.9\%) & 1275.5~(+1.8\%) & 2281.7~(+2.2\%) & 3530.4~(+3.6\%) & 4416.5~(+2.7\%) \\
 & Random & BlockV & 701.4~(+6.5\%) & 1322.7~(+5.5\%) & 2365.8~(+6.0\%) & 3629.0~(+6.5\%) & 4542.1~(+5.6\%) \\
\midrule
Tree (First) & TopK & Naive & 710.3~(+7.8\%) & 1326.6~(+5.9\%) & 2352.3~(+5.4\%) & 3636.9~(+6.7\%) & 4544.0~(+5.6\%) \\
\midrule
\multirow{3}{*}{Tree (Replay)} & WoR & RRS & 700.5~(+6.3\%) & 1311.2~(+4.6\%) & 2368.0~(+6.1\%) & 3606.1~(+5.8\%) & 4544.5~(+5.6\%) \\
 & WoR & Traversal & 722.2~(+9.6\%) & 1346.9~(+7.5\%) & 2410.5~(+8.0\%) & 3677.9~(+7.9\%) & 4667.2~(+8.5\%) \\
 & Mixed & UniVer & \textbf{728.4~(+10.6\%)} & \textbf{1357.1~(+8.3\%)} & \textbf{2419.0~(+8.4\%)} & \textbf{3709.2~(+8.9\%)} & \textbf{4677.1~(+8.7\%)} \\
\bottomrule
\end{tabular}
}
\end{table}

\section{Conclusion}
\label{sec:conclusion}

This work examines the interplay between tree structure, candidate sampling, and verification in speculative decoding. Our analysis shows that the structural advantage of deterministic dynamic trees does not always compensate for the gains available to sampled chains with advanced verification. We introduce Tsubame to combine these complementary benefits: it first plans a context-aware topology, then replays that topology to generate stochastic candidates for verification. Diffusion-based drafters make this second pass inexpensive, while compatible verification guarantees that the target distribution is preserved. Experiments across DFlash2, Domino, and DSpark demonstrate improvements in acceptance length and throughput, including cases where replay reverses the disadvantage of deterministic trees against advanced chain baselines. These benefits extend to larger candidate budgets and concurrent serving. Together, our findings show that topology planning need not determine the final candidates: separating these decisions provides a practical way to combine structural allocation with effective sampling and verification.

\bibliography{iclr2027_conference}
\bibliographystyle{iclr2027_conference}

\appendix

\section{Additional Background on Sampling and Verification}
\label{app:sampling-verification}

Section~\ref{sec:sampling-verification} contrasts RS with top-1 target-sample matching. More generally, target-sample matching accepts a child from a fixed candidate set $C$ with probability $\sum_{v\in C}p(v)$. With multiple candidates and positions, both the sampling scheme and the verification rule provide further design choices.

With multiple candidates at one position, the sampling scheme is a separate design choice. Sampling with replacement draws independently from $q$, allowing repeated tokens. Sampling without replacement (WoR) excludes previously drawn tokens and renormalizes $q$ before each subsequent draw. Mixed sampling, corresponding to the greedy draft sampling scheme of \citet{hu2025optimal}, selects all but one candidate deterministically by draft probability and samples the last from the renormalized remainder. These schemes induce different joint candidate distributions and can have different optimal acceptance rates \citep{hu2025optimal}.

Each sampling scheme must be paired with a compatible verifier. With-replacement candidates can be verified using Recursive Rejection Sampling (RRS), as in SpecInfer \citep{miao2024specinfer}, or alternative rules such as K-Seq from SpecTr \citep{sun2023spectr}. WoR candidates are commonly paired with WoR variants of RRS, as in Sequoia, EAGLE, and Recursive Speculative Decoding (RSD) \citep{chen2024sequoia,li2024eagle,jeon2024rsd}. Even under the same candidate distribution, different verifiers can achieve different acceptance rates while all preserving the target distribution. For fixed target and draft distributions and a candidate budget, the sampling strategy determines the attainable acceptance ceiling, while the verifier determines how closely it is approached \citep{sun2023spectr,hu2025optimal}.

Beyond a single token or multiple candidates at one position, chains and trees introduce multi-step dependencies that verification must also account for. Block Verification shows that token-wise RS need not maximize expected token yield over multiple positions and uses sequence-level probabilities to make acceptance decisions, but is restricted to chains \citep{sun2025block}. Traversal Verification extends sequence-level verification to trees through sequence-level RRS with sampling without replacement \citep{weng2025traversal}. UniVer accommodates different sampling strategies within a unified verification framework and reduces exactly to Traversal Verification under WoR sampling \citep{weng2026univer}.

\clearpage
\section{Additional Experimental Details and Results}
\label{app:experimental-details}

\subsection{Evaluation Samples and Aggregation}

\paragraph{Evaluation samples.}
We follow Domino's per-dataset sample counts \citep{huang2026domino}, evaluating on six datasets: 128 GSM8K, 128 Math500, 164 HumanEval, 128 MBPP, 80 MT-Bench, and 128 Alpaca samples. A multiturn prompt is treated as one sample. Serving experiments use GSM8K, HumanEval, and MT-Bench with 3 seeds, generating 512 tokens per request.

\paragraph{Draft candidate pools.}
DFlash2 natively uses 16-token candidate pools, which we retain in the Muse experiments. For Domino, we introduce 64-token pools for both chains and trees; Appendix~\ref{app:domino-sparse-control} provides the corresponding full-vocabulary control. For DSpark, we introduce sparse candidate pools for trees using the dual-1024 configuration, while its native chain remains dense. These restrictions apply only to draft proposals, not to the target distribution.

\paragraph{Pooled and macro aggregation.}
Unless otherwise specified, all reported metrics use pooled aggregation across the samples and seeds in each comparison group. Throughput is total output tokens divided by total recorded time. Speedup ratio (SR) is the ratio of speculative to AR throughput for the same target model, dataset, hardware, and backend. Let $A_i$ and $R_i$ denote the sum of recorded acceptance lengths and the number of decoding rounds for sample $i$, treating each seed's evaluation as a separate observation. Pooled and sample-level macro acceptance lengths are defined as
\begin{equation}
    \tau_{\mathrm{pooled}} = \frac{\sum_{i=1}^{N} A_i}{\sum_{i=1}^{N} R_i},
    \qquad
    \tau_{\mathrm{macro}} = \frac{1}{N}\sum_{i=1}^{N}\frac{A_i}{R_i}.
\end{equation}
Pooled aggregation weights each sample by its number of decoding rounds, whereas macro aggregation gives each sample equal weight. The two statistics can therefore differ when samples have different round counts and acceptance lengths. Cross-dataset pooled results combine the underlying totals across datasets using the same rule.

\begin{table}[!htbp]
\centering
\small
\setlength{\tabcolsep}{2pt}
\renewcommand{\arraystretch}{1.12}
\caption{Pooled and sample-level macro acceptance lengths for each dataset, combining three seeds for the methods in Tables~\ref{tab:main-results-dflash2} and~\ref{tab:main-results}. Macro gives each sample--seed observation equal weight.}
\label{tab:pooled-macro}
\begin{tabular}{l*{6}{cc}}
\toprule
& \multicolumn{2}{c}{GSM8K} & \multicolumn{2}{c}{Math500} & \multicolumn{2}{c}{HumanEval} & \multicolumn{2}{c}{MBPP} & \multicolumn{2}{c}{MT-Bench} & \multicolumn{2}{c}{Alpaca} \\
\cmidrule(lr){2-3}\cmidrule(lr){4-5}\cmidrule(lr){6-7}\cmidrule(lr){8-9}\cmidrule(lr){10-11}\cmidrule(lr){12-13}
Method & Pooled & Macro & Pooled & Macro & Pooled & Macro & Pooled & Macro & Pooled & Macro & Pooled & Macro \\
\midrule
\multicolumn{13}{c}{\textbf{Domino with Qwen3-8B}} \\
\midrule
Top1 & 7.21 & 7.88 & 5.96 & 6.78 & 5.49 & 5.65 & 5.16 & 5.64 & 3.14 & 4.05 & 2.87 & 3.20 \\
RS & 8.05 & 8.55 & 6.58 & 7.32 & 6.08 & 6.27 & 5.66 & 6.06 & 3.56 & 4.36 & 3.20 & 3.43 \\
BlockV & 8.38 & 8.88 & 6.98 & 7.71 & 6.30 & 6.48 & 5.87 & 6.32 & 3.73 & 4.55 & 3.35 & 3.55 \\
First & 8.00 & 8.48 & 6.75 & 7.45 & 6.49 & 6.66 & 6.30 & 6.68 & 4.26 & 5.00 & 3.97 & 4.22 \\
WoR/RRS & 8.37 & 8.83 & 7.19 & 7.80 & 6.57 & 6.73 & 6.23 & 6.64 & 4.22 & 4.92 & 3.85 & 4.01 \\
WoR/Traversal & 8.59 & 9.03 & 7.23 & 7.83 & 6.88 & 7.04 & 6.44 & 6.85 & 4.35 & 5.12 & 3.96 & 4.12 \\
Mixed/UniVer & 8.57 & 9.09 & 7.26 & 7.84 & 6.87 & 7.06 & 6.48 & 6.86 & 4.46 & 5.14 & 4.10 & 4.27 \\
\midrule
\multicolumn{13}{c}{\textbf{DFlash2 with Muse Glimmer 30B}} \\
\midrule
Top1 & 5.44 & 5.83 & 5.42 & 5.82 & 4.41 & 4.55 & 3.75 & 3.82 & 3.38 & 3.69 & 2.97 & 3.18 \\
RS & 6.25 & 6.65 & 6.12 & 6.55 & 5.48 & 5.64 & 4.82 & 4.91 & 3.94 & 4.30 & 3.62 & 3.88 \\
BlockV & 6.52 & 6.88 & 6.25 & 6.71 & 5.67 & 5.83 & 5.07 & 5.17 & 4.12 & 4.47 & 3.70 & 4.01 \\
First & 6.39 & 6.74 & 6.21 & 6.58 & 5.42 & 5.54 & 4.69 & 4.76 & 4.14 & 4.44 & 3.70 & 3.94 \\
WoR/RRS & 6.67 & 6.96 & 6.59 & 6.95 & 5.85 & 5.96 & 5.12 & 5.19 & 4.39 & 4.70 & 3.96 & 4.19 \\
WoR/Traversal & 6.83 & 7.14 & 6.68 & 7.05 & 5.97 & 6.11 & 5.30 & 5.38 & 4.50 & 4.83 & 4.03 & 4.30 \\
Mixed/UniVer & 6.79 & 7.10 & 6.70 & 7.06 & 5.96 & 6.08 & 5.25 & 5.32 & 4.46 & 4.78 & 4.02 & 4.25 \\
\bottomrule
\end{tabular}
\end{table}

Table~\ref{tab:pooled-macro} compares both acceptance-length statistics separately for each dataset and drafter. Within each dataset, macro values sum the per-sample means across the three seeds and divide by the number of sample--seed observations. This breakdown shows how the choice of aggregation affects the reported acceptance length on individual datasets.

\subsection{Six-Dataset Aggregates and Cross-Seed Variability}
\label{app:seed-variability}

We aggregate the six-dataset results from Tables~\ref{tab:main-results-dflash2} and~\ref{tab:main-results} at $B=16$. For each method and seed, we pool the recorded totals across all six datasets to obtain acceptance length and throughput. SR is the ratio of pooled speculative to pooled AR throughput for the same six-dataset workload, with the AR reference held fixed across seeds. Table~\ref{tab:seed-variability} reports the all-seed pooled results alongside the mean and sample standard deviation (SD) of the three seed-level estimates.

\begin{table}[!htbp]
\centering
\footnotesize
\setlength{\tabcolsep}{3pt}
\renewcommand{\arraystretch}{1.12}
\caption{Six-dataset aggregates at $B=16$ for the methods in the two main tables. Pooled combines all datasets and seeds; seed mean $\pm$ SD and 95\% CIs use the three separately pooled seed-level estimates. SR is expressed as a multiplicative speedup.}
\label{tab:seed-variability}
\resizebox{\textwidth}{!}{%
\begin{tabular}{l*{6}{c}}
\toprule
& \multicolumn{3}{c}{$\tau$} & \multicolumn{3}{c}{SR ($\times$)} \\
\cmidrule(lr){2-4}\cmidrule(lr){5-7}
Method & Pooled & Seed mean $\pm$ SD & 95\% CI & Pooled & Seed mean $\pm$ SD & 95\% CI \\
\midrule
\multicolumn{7}{l}{\textbf{Domino with Qwen3-8B}} \\
Top1 & 4.539 & $4.539 \pm 0.009$ & [4.516, 4.563] & 3.524 & $3.524 \pm 0.007$ & [3.506, 3.542] \\
RS & 5.065 & $5.065 \pm 0.013$ & [5.032, 5.098] & 3.904 & $3.904 \pm 0.009$ & [3.882, 3.927] \\
BlockV & 5.291 & $5.291 \pm 0.018$ & [5.245, 5.336] & 4.066 & $4.066 \pm 0.013$ & [4.032, 4.100] \\
First & 5.676 & $5.676 \pm 0.015$ & [5.639, 5.714] & 4.339 & $4.339 \pm 0.016$ & [4.299, 4.380] \\
WoR/RRS & 5.716 & $5.716 \pm 0.016$ & [5.676, 5.757] & 4.343 & $4.343 \pm 0.012$ & [4.313, 4.373] \\
WoR/Traversal & 5.889 & $5.889 \pm 0.019$ & [5.841, 5.937] & 4.468 & $4.468 \pm 0.015$ & [4.430, 4.506] \\
Mixed/UniVer & 5.966 & $5.966 \pm 0.018$ & [5.920, 6.011] & 4.521 & $4.521 \pm 0.014$ & [4.487, 4.555] \\
\midrule
\multicolumn{7}{l}{\textbf{DFlash2 with Muse Glimmer 30B}} \\
Top1 & 3.995 & $3.995 \pm 0.012$ & [3.966, 4.024] & 3.046 & $3.047 \pm 0.007$ & [3.030, 3.063] \\
RS & 4.808 & $4.808 \pm 0.004$ & [4.798, 4.818] & 3.592 & $3.592 \pm 0.012$ & [3.562, 3.622] \\
BlockV & 4.986 & $4.986 \pm 0.017$ & [4.943, 5.028] & 3.723 & $3.723 \pm 0.024$ & [3.664, 3.783] \\
First & 4.871 & $4.871 \pm 0.016$ & [4.831, 4.912] & 3.620 & $3.620 \pm 0.012$ & [3.592, 3.649] \\
WoR/RRS & 5.211 & $5.211 \pm 0.011$ & [5.183, 5.239] & 3.864 & $3.864 \pm 0.006$ & [3.850, 3.878] \\
WoR/Traversal & 5.332 & $5.332 \pm 0.017$ & [5.289, 5.375] & 3.953 & $3.953 \pm 0.015$ & [3.915, 3.992] \\
Mixed/UniVer & 5.304 & $5.304 \pm 0.026$ & [5.239, 5.369] & 3.928 & $3.928 \pm 0.022$ & [3.873, 3.984] \\
\bottomrule
\end{tabular}
}
\end{table}

For a seed-level metric $m$, the approximate 95\% confidence interval (CI) for its mean is
\begin{equation}
    \bar{m} \pm t_{0.975,2}\frac{s_m}{\sqrt{3}},
    \qquad t_{0.975,2}\approx 4.303,
\end{equation}
where $s_m$ is the sample SD across seeds, computed with denominator $3-1$. These Student-$t$ intervals assume independent, normally distributed seed-level estimates. The six datasets form a fixed evaluation suite; the intervals characterize run-to-run uncertainty on this suite. CIs are centered on the seed mean, while the all-seed pooled result is computed directly from the combined totals.

\clearpage
\subsection{Choice of First-Pass Planner}
\label{app:planner-comparison}

Tsubame does not constrain how the first pass is built: any planner that fixes the topology before replay can be used. We use node-wise max-heap expansion for DFlash2 and EAGLE-2-style expand-then-prune for Domino and DSpark. Both allocate nodes by cumulative draft path probability and differ only in implementation. Max-heap search maximizes summed draft path probability under a node budget \citep{ringel2026ddtree,lin2026dominotree}, and expand-then-prune can recover the same solution given a sufficiently wide frontier and candidate pool. We simply retain EAGLE-2's default frontier width of 10; larger frontiers such as 16 or 32 can also be used, and we did not tune this setting.

Table~\ref{tab:planner-comparison} compares DominoTree with our first pass on Domino. The two planners achieve similar acceptance lengths: DominoTree is slightly ahead at $B=16$, consistent with our narrower frontier, and the difference largely vanishes at $B=32$ and $B=64$. Expand-then-prune batches GRU updates and logit corrections across each frontier, requiring at most one sequential correction stage per expansion level, whereas DominoTree's sequential stages scale with the number of expanded nodes \citep{lin2026dominotree}; our first pass therefore achieves higher throughput at larger budgets. Mixed/UniVer, built on our first pass, exceeds both planners at every budget, reflecting that Tsubame's gains come from replay rather than from the choice of planner.

\begin{table}[!htbp]
\centering
\footnotesize
\setlength{\tabcolsep}{2.5pt}
\renewcommand{\arraystretch}{1.12}
\caption{Domino with Qwen3-8B: DominoTree, First pass, and Mixed/UniVer under stochastic decoding ($T=1$) across candidate budgets. Budgets include the root, and results pool three seeds.}
\label{tab:planner-comparison}
\resizebox{\textwidth}{!}{%
\begin{tabular}{cl*{6}{cc}}
\toprule
& & \multicolumn{4}{c}{Math} & \multicolumn{4}{c}{Code} & \multicolumn{4}{c}{Conversation} \\
\cmidrule(lr){3-6}\cmidrule(lr){7-10}\cmidrule(lr){11-14}
& & \multicolumn{2}{c}{GSM8K} & \multicolumn{2}{c}{Math500} & \multicolumn{2}{c}{HumanEval} & \multicolumn{2}{c}{MBPP} & \multicolumn{2}{c}{MT-Bench} & \multicolumn{2}{c}{Alpaca} \\
\cmidrule(lr){3-4}\cmidrule(lr){5-6}\cmidrule(lr){7-8}\cmidrule(lr){9-10}\cmidrule(lr){11-12}\cmidrule(lr){13-14}
$B$ & Method & $\tau$ & SR & $\tau$ & SR & $\tau$ & SR & $\tau$ & SR & $\tau$ & SR & $\tau$ & SR \\
\midrule
\multirow{3}{*}{16} & DominoTree & 8.08 & $6.09\times$ & 6.81 & $5.18\times$ & 6.56 & $4.99\times$ & 6.30 & $4.89\times$ & 4.28 & $3.24\times$ & 4.00 & $3.04\times$ \\
& First & 8.00 & $6.04\times$ & 6.75 & $5.16\times$ & 6.49 & $4.97\times$ & 6.30 & $4.93\times$ & 4.26 & $3.24\times$ & 3.97 & $3.04\times$ \\
& Mixed/UniVer & \textbf{8.57} & $\mathbf{6.44}\times$ & \textbf{7.26} & $\mathbf{5.51}\times$ & \textbf{6.87} & $\mathbf{5.19}\times$ & \textbf{6.48} & $\mathbf{5.02}\times$ & \textbf{4.46} & $\mathbf{3.36}\times$ & \textbf{4.10} & $\mathbf{3.11}\times$ \\
\midrule
\multirow{3}{*}{32} & DominoTree & 9.41 & $6.10\times$ & 7.66 & $5.02\times$ & 7.43 & $4.87\times$ & 7.31 & $4.89\times$ & 4.79 & $3.13\times$ & 4.42 & $2.89\times$ \\
& First & 9.36 & $6.18\times$ & 7.79 & $5.22\times$ & 7.35 & $4.93\times$ & 7.30 & $5.00\times$ & 4.78 & $3.20\times$ & 4.43 & $2.97\times$ \\
& Mixed/UniVer & \textbf{9.70} & $\mathbf{6.33}\times$ & \textbf{8.21} & $\mathbf{5.44}\times$ & \textbf{7.80} & $\mathbf{5.17}\times$ & \textbf{7.59} & $\mathbf{5.15}\times$ & \textbf{5.01} & $\mathbf{3.32}\times$ & \textbf{4.62} & $\mathbf{3.06}\times$ \\
\midrule
\multirow{3}{*}{64} & DominoTree & 10.12 & $6.35\times$ & 8.46 & $5.35\times$ & 8.10 & $5.12\times$ & 8.06 & $5.20\times$ & 5.27 & $3.33\times$ & 4.90 & $3.10\times$ \\
& First & 10.32 & $6.81\times$ & 8.46 & $5.67\times$ & 8.14 & $5.45\times$ & 8.05 & $5.51\times$ & 5.21 & $3.48\times$ & 4.84 & $3.25\times$ \\
& Mixed/UniVer & \textbf{10.62} & $\mathbf{6.93}\times$ & \textbf{8.85} & $\mathbf{5.87}\times$ & \textbf{8.58} & $\mathbf{5.69}\times$ & \textbf{8.29} & $\mathbf{5.62}\times$ & \textbf{5.44} & $\mathbf{3.60}\times$ & \textbf{5.11} & $\mathbf{3.39}\times$ \\
\bottomrule
\end{tabular}
}
\end{table}

\clearpage
\subsection{Additional Experiments with DSpark and DFlash2--Qwen3.8}
\label{app:dspark-results}

\paragraph{DSpark with Qwen3-8B.}
Replay with Mixed/UniVer improves acceptance length and throughput over First pass on all six datasets (Table~\ref{tab:dspark-main}), with pooled gains of 4.95\% and 3.33\%, respectively.

\begin{table}[!htbp]
\centering
\footnotesize
\setlength{\tabcolsep}{2.5pt}
\renewcommand{\arraystretch}{1.12}
\caption{DSpark with Qwen3-8B results on all six datasets, pooled over three seeds. Chains retain DSpark's native draft depth of 8; trees use a candidate budget of $B=16$. TopK denotes deterministic candidate selection in the first pass, and Naive denotes target-sample matching. Bold marks the best value in each column.}
\label{tab:dspark-main}
\resizebox{\textwidth}{!}{%
\begin{tabular}{lll*{6}{cc}}
\toprule
 & & & \multicolumn{4}{c}{Math} & \multicolumn{4}{c}{Code} & \multicolumn{4}{c}{Conversation} \\
\cmidrule(lr){4-7}\cmidrule(lr){8-11}\cmidrule(lr){12-15}
 & &  & \multicolumn{2}{c}{GSM8K} & \multicolumn{2}{c}{Math500} & \multicolumn{2}{c}{HumanEval} & \multicolumn{2}{c}{MBPP} & \multicolumn{2}{c}{MT-Bench} & \multicolumn{2}{c}{Alpaca} \\
\cmidrule(lr){4-5}\cmidrule(lr){6-7}\cmidrule(lr){8-9}\cmidrule(lr){10-11}\cmidrule(lr){12-13}\cmidrule(lr){14-15}
Structure & Sampling & Verifier & $\tau$ & SR & $\tau$ & SR & $\tau$ & SR & $\tau$ & SR & $\tau$ & SR & $\tau$ & SR \\
\midrule
\multirow{3}{*}{Chain} & Top1 & Naive & 5.73 & $4.48\times$ & 5.14 & $4.04\times$ & 5.00 & $3.92\times$ & 4.82 & $3.83\times$ & 3.39 & $2.66\times$ & 3.24 & $2.49\times$ \\
 & Random & RS & 6.35 & $4.84\times$ & 5.92 & $4.56\times$ & 5.59 & $4.34\times$ & 5.28 & $4.13\times$ & 3.81 & $2.93\times$ & 3.54 & $2.68\times$ \\
 & Random & BlockV & 6.39 & $4.89\times$ & 6.00 & $4.65\times$ & 5.64 & $4.37\times$ & 5.42 & $4.21\times$ & 3.90 & $3.01\times$ & 3.63 & $2.76\times$ \\
\midrule
Tree (First) & TopK & Naive & 6.66 & $5.03\times$ & 6.12 & $4.63\times$ & 5.92 & $4.50\times$ & 5.72 & $4.43\times$ & 4.17 & $3.16\times$ & 3.99 & $3.02\times$ \\
\midrule
\multirow{3}{*}{Tree (Replay)} & WoR & RRS & 6.78 & $5.02\times$ & 6.30 & $4.72\times$ & 6.10 & $4.59\times$ & 5.87 & $4.48\times$ & 4.31 & $3.23\times$ & 4.07 & $3.03\times$ \\
 & WoR & Traversal & 6.83 & $5.08\times$ & \textbf{6.38} & $\mathbf{4.81}\times$ & 6.16 & $4.60\times$ & 5.94 & $\mathbf{4.55}\times$ & 4.41 & $3.30\times$ & 4.15 & $3.09\times$ \\
 & Mixed & UniVer & \textbf{6.85} & $\mathbf{5.09}\times$ & \textbf{6.38} & $4.74\times$ & \textbf{6.21} & $\mathbf{4.66}\times$ & \textbf{5.98} & $\mathbf{4.55}\times$ & \textbf{4.44} & $\mathbf{3.32}\times$ & \textbf{4.22} & $\mathbf{3.13}\times$ \\
\bottomrule
\end{tabular}
}
\end{table}

\paragraph{DFlash2 with Qwen3.8-27B.}
The released DFlash2 implementation for Qwen3.8-27B uses SGLang \citep{zheng2024sglang}, whereas our Muse experiments use the Python implementation. We therefore compare methods using DFlash2's native SGLang single-request configuration and a matched SGLang AR baseline, reporting request-mean acceptance length and SR. As shown in Table~\ref{tab:qwen38-b16}, Mixed/UniVer improves the six-dataset request-mean acceptance length by 1.84\% and throughput by 1.25\% over First pass. On code tasks, sampled chains with RS yield shorter acceptance lengths than Top1 chains (e.g., 4.26 vs.\ 4.40 on HumanEval), indicating that sampling a lone candidate is not beneficial for this model--task pair. We therefore fill singleton child slots with the top-1 token on code tasks, while nodes with multiple children retain mixed sampling (top-$(m-1)$ candidates plus one sampled candidate). This choice follows the chain-level comparison between RS and Top1, not tree-level results.

\begin{table}[!htbp]
\centering
\footnotesize
\setlength{\tabcolsep}{2.5pt}
\renewcommand{\arraystretch}{1.12}
\caption{DFlash2 with Qwen3.8-27B (non-thinking) results on all six datasets, pooled over three seeds. Chains retain Qwen3.8-DFlash2's native draft depth of 8; trees use a candidate budget of $B=16$. Results use the native SGLang single-request configuration and report request-mean acceptance length and SR relative to a matched SGLang AR baseline. TopK denotes deterministic candidate selection in the first pass, and Naive denotes target-sample matching. Bold marks the best value in each column.}
\label{tab:qwen38-b16}
\resizebox{\textwidth}{!}{%
\begin{tabular}{lll*{6}{cc}}
\toprule
 & & & \multicolumn{4}{c}{Math} & \multicolumn{4}{c}{Code} & \multicolumn{4}{c}{Conversation} \\
\cmidrule(lr){4-7}\cmidrule(lr){8-11}\cmidrule(lr){12-15}
 & & & \multicolumn{2}{c}{GSM8K} & \multicolumn{2}{c}{Math500} & \multicolumn{2}{c}{HumanEval} & \multicolumn{2}{c}{MBPP} & \multicolumn{2}{c}{MT-Bench} & \multicolumn{2}{c}{Alpaca} \\
\cmidrule(lr){4-5}\cmidrule(lr){6-7}\cmidrule(lr){8-9}\cmidrule(lr){10-11}\cmidrule(lr){12-13}\cmidrule(lr){14-15}
Structure & Sampling & Verifier & $\tau_{\mathrm{req}}$ & SR & $\tau_{\mathrm{req}}$ & SR & $\tau_{\mathrm{req}}$ & SR & $\tau_{\mathrm{req}}$ & SR & $\tau_{\mathrm{req}}$ & SR & $\tau_{\mathrm{req}}$ & SR \\
\midrule
\multirow{3}{*}{Chain} & Top1 & Naive & 4.26 & $3.15\times$ & 5.57 & $4.31\times$ & 4.40 & $3.21\times$ & 3.46 & $2.55\times$ & 3.40 & $2.49\times$ & 2.85 & $2.20\times$ \\
 & Random & RS & 4.39 & $3.22\times$ & 5.79 & $4.50\times$ & 4.26 & $2.96\times$ & 3.40 & $2.41\times$ & 3.70 & $2.71\times$ & 3.08 & $2.35\times$ \\
 & Random & BlockV & 4.44 & $3.26\times$ & 5.88 & $4.57\times$ & 4.37 & $3.04\times$ & 3.43 & $2.44\times$ & 3.73 & $2.74\times$ & 3.14 & $2.40\times$ \\
\midrule
Tree (First) & TopK & Naive & 4.81 & $3.38\times$ & 6.38 & $4.71\times$ & 4.84 & $3.35\times$ & 3.99 & $2.81\times$ & 4.11 & $2.88\times$ & 3.45 & $2.53\times$ \\
\midrule
\multirow{3}{*}{Tree (Replay)} & WoR & RRS & 4.80 & $3.38\times$ & 6.35 & $4.63\times$ & 4.80 & $3.23\times$ & 4.03 & $2.76\times$ & 4.15 & $2.91\times$ & 3.51 & $2.56\times$ \\
 & WoR & Traversal & 4.89 & $\mathbf{3.46}\times$ & 6.39 & $4.67\times$ & 4.80 & $3.24\times$ & 3.94 & $2.69\times$ & 4.17 & $2.93\times$ & 3.56 & $2.57\times$ \\
 & Mixed & UniVer & \textbf{4.91} & $3.42\times$ & \textbf{6.47} & $\mathbf{4.74}\times$ & \textbf{4.90} & $\mathbf{3.38}\times$ & \textbf{4.04} & $\mathbf{2.82}\times$ & \textbf{4.23} & $\mathbf{2.96}\times$ & \textbf{3.56} & $\mathbf{2.59}\times$ \\
\bottomrule
\end{tabular}
}
\end{table}

\clearpage
\subsection{Temperature Ablation}
\label{app:additional-controls}

\begin{table}[!htbp]
\centering
\small
\setlength{\tabcolsep}{4pt}
\renewcommand{\arraystretch}{1.12}
\caption{Domino with Qwen3-8B at different target temperatures at $B=16$, pooled over six datasets. All draft proposals use sparse64 candidate pools. Stochastic results combine three seeds. Greedy ($T=0$) results use seed 0 and are included only for Top1 chain and First pass; stochastic sampling and replay entries are omitted (--). Bold marks the best value in each stochastic-decoding column. The underlined verifier denotes the default configuration adopted by Domino.}
\label{tab:temperature-controls}
\begin{tabular}{lll*{3}{cc}}
\toprule
& & & \multicolumn{2}{c}{$T=1.0$} & \multicolumn{2}{c}{$T=0.6$} & \multicolumn{2}{c}{$T=0$} \\
\cmidrule(lr){4-5}\cmidrule(lr){6-7}\cmidrule(lr){8-9}
Structure & Sampling & Verifier & $\tau$ & TPS & $\tau$ & TPS & $\tau$ & TPS \\
\midrule
\multirow{3}{*}{Chain} & Top1 & \underline{Naive} & 4.54 & 126.0 & 5.27 & 145.9 & 5.82 & 161.2 \\
 & Random & RS & 5.07 & 139.6 & 5.54 & 152.3 & -- & -- \\
 & Random & BlockV & 5.29 & 145.4 & 5.69 & 156.2 & -- & -- \\
\midrule
Tree (First) & TopK & Naive & 5.68 & 155.1 & 6.35 & 173.4 & 6.85 & 187.2 \\
\midrule
\multirow{3}{*}{Tree (Replay)} & WoR & RRS & 5.72 & 155.3 & 6.32 & 171.2 & -- & -- \\
 & WoR & Traversal & 5.89 & 159.8 & 6.39 & 172.9 & -- & -- \\
 & Mixed & UniVer & \textbf{5.97} & \textbf{161.6} & \textbf{6.48} & \textbf{175.2} & -- & -- \\
\bottomrule
\end{tabular}
\end{table}

\begin{table}[!htbp]
\centering
\small
\setlength{\tabcolsep}{4pt}
\renewcommand{\arraystretch}{1.12}
\caption{DFlash2 with Muse Glimmer 30B at different target temperatures at $B=16$, pooled over six datasets. All draft proposals use DFlash2's native 16-token candidate pools. Stochastic results combine three seeds. Greedy ($T=0$) results use seed 0 and are included only for Top1 chain and First pass; stochastic sampling and replay entries are omitted (--). Bold marks the best value in each stochastic-decoding column. The underlined verifier denotes the default configuration adopted by DFlash2.}
\label{tab:temperature-controls-dflash2}
\begin{tabular}{lll*{3}{cc}}
\toprule
& & & \multicolumn{2}{c}{$T=1.0$} & \multicolumn{2}{c}{$T=0.6$} & \multicolumn{2}{c}{$T=0$} \\
\cmidrule(lr){4-5}\cmidrule(lr){6-7}\cmidrule(lr){8-9}
Structure & Sampling & Verifier & $\tau$ & TPS & $\tau$ & TPS & $\tau$ & TPS \\
\midrule
\multirow{3}{*}{Chain} & Top1 & Naive & 3.99 & 69.8 & 4.52 & 78.9 & 5.00 & 87.4 \\
 & Random & \underline{RS} & 4.81 & 82.3 & 4.99 & 85.7 & -- & -- \\
 & Random & BlockV & 4.99 & 85.3 & 5.11 & 87.5 & -- & -- \\
\midrule
Tree (First) & TopK & Naive & 4.87 & 83.0 & 5.45 & 92.4 & 5.85 & 99.6 \\
\midrule
\multirow{3}{*}{Tree (Replay)} & WoR & RRS & 5.21 & 88.6 & 5.54 & 94.0 & -- & -- \\
 & WoR & Traversal & \textbf{5.33} & \textbf{90.6} & 5.61 & 95.3 & -- & -- \\
 & Mixed & UniVer & 5.30 & 90.0 & \textbf{5.63} & \textbf{95.6} & -- & -- \\
\bottomrule
\end{tabular}
\end{table}

\paragraph{Temperature ablation.}
Domino and DFlash2--Muse exhibit similar temperature trends, although the magnitudes differ (Tables~\ref{tab:temperature-controls} and~\ref{tab:temperature-controls-dflash2}). At $T=1$, the probability distributions are more diffuse, giving sampling-based chains and trees a larger advantage over deterministic candidate selection. As temperature decreases and probability mass concentrates, this advantage diminishes. The greedy $T=0$ entries use seed 0; in this limit the distributions collapse to point masses and verification reduces to token matching.

\clearpage
\subsection{Latency Breakdown}
\label{app:latency-breakdown}

Table~\ref{tab:latency-breakdown} reports the latency breakdown using the first eight samples from each of GSM8K, HumanEval, and MT-Bench at $B=16$ and $T=1$. Domino runs on an RTX 3090; DFlash2--Muse runs on an RTX PRO 6000.

\begin{table}[!htbp]
\centering
\small
\setlength{\tabcolsep}{5pt}
\renewcommand{\arraystretch}{1.12}
\caption{Latency in ms per steady-state decoding round, pooled over the three datasets. Tree combines construction and replay.}
\label{tab:latency-breakdown}
\begin{tabular}{lrrrrrr}
\toprule
Method & Target & Draft & Others & Verifier & Tree & Total \\
\midrule
\multicolumn{7}{c}{\textbf{Domino with Qwen3-8B (RTX 3090)}} \\
\midrule
First pass & 27.951 & 5.875 & 0.843 & 0.131 & 0.993 & 35.794 \\
WoR/RRS & 27.848 & 5.868 & 0.817 & 0.128 & 1.274 & 35.935 \\
WoR/Traversal & 27.889 & 5.897 & 0.834 & 0.164 & 1.274 & 36.058 \\
Mixed/UniVer & 27.926 & 5.865 & 0.832 & 0.164 & 1.292 & 36.079 \\
\midrule
\multicolumn{7}{c}{\textbf{DFlash2 with Muse Glimmer 30B (RTX PRO 6000)}} \\
\midrule
First pass & 49.246 & 7.732 & 1.104 & 0.159 & 0.506 & 58.747 \\
WoR/RRS & 49.252 & 7.734 & 1.083 & 0.149 & 0.572 & 58.789 \\
WoR/Traversal & 49.237 & 7.743 & 1.084 & 0.170 & 0.572 & 58.807 \\
Mixed/UniVer & 49.204 & 7.741 & 1.083 & 0.173 & 0.590 & 58.791 \\
\bottomrule
\end{tabular}
\end{table}

Domino incurs a higher second-pass tree construction cost because its GRU-based correction must be rerun under the replayed prefixes, whereas DFlash2 only requires transition-score lookups and makes replay particularly inexpensive. Verification can add a small amount of latency as well. Together, the additional tree construction and verification costs are on the order of 1\% of Domino's total round latency, and are even smaller for DFlash2.

\subsection{Sparse Candidate Pools for Domino}
\label{app:domino-sparse-control}

As described in Section~\ref{sec:tree-construction}, Domino restricts its draft vocabulary to the 64 highest-scoring backbone tokens at each position (sparse64) and applies GRU corrections and normalization within this pool; verification and the output distribution remain full-vocabulary. We apply sparse64 to all Domino methods, including chains, so that chain and tree proposals share the same support. Table~\ref{tab:domino-sparse-control} shows that this does not weaken the chain baselines: sparse64 changes pooled acceptance length by less than 0.7\% for all three chains while increasing speedup ratio (SR) by 1.9\%--4.7\%.

\begin{table}[!htbp]
\centering
\small
\setlength{\tabcolsep}{5pt}
\renewcommand{\arraystretch}{1.12}
\caption{Domino chain baselines with sparse64 and full-vocabulary draft proposals at $T=1$ and $B=16$, pooled over six datasets and three seeds. SR is relative to the matched Qwen3-8B AR baseline, and changes are relative to full-vocabulary drafting.}
\label{tab:domino-sparse-control}
\begin{tabular}{lcccccc}
\toprule
& \multicolumn{3}{c}{Acceptance length} & \multicolumn{3}{c}{SR ($\times$)} \\
\cmidrule(lr){2-4}\cmidrule(lr){5-7}
Method & Sparse64 & Full vocab. & Change & Sparse64 & Full vocab. & Change \\
\midrule
Top1 & 4.539 & 4.570 & $-0.67\%$ & 3.524 & 3.458 & $+1.90\%$ \\
RS & 5.065 & 5.060 & $+0.11\%$ & 3.904 & 3.730 & $+4.66\%$ \\
BlockV & 5.291 & 5.308 & $-0.32\%$ & 4.066 & 3.902 & $+4.20\%$ \\
\bottomrule
\end{tabular}
\end{table}

\end{document}